\documentclass{article} 
\usepackage{iclr2026_conference,times}

\usepackage{amsmath,amsfonts,bm}









\def\eqref#1{equation~\ref{#1}}









\def\1{\bm{1}}










\DeclareMathAlphabet{\mathsfit}{\encodingdefault}{\sfdefault}{m}{sl}
\SetMathAlphabet{\mathsfit}{bold}{\encodingdefault}{\sfdefault}{bx}{n}













\usepackage{hyperref}
\usepackage{url}
\usepackage{graphicx}
\usepackage{subcaption}
\usepackage{amsmath}
\usepackage{amsthm}

\newtheorem{theorem}{Theorem}
\newtheorem{lemma}[theorem]{Lemma}
\usepackage{booktabs}
\usepackage{multirow}
\usepackage{adjustbox}
\usepackage{subcaption}
\usepackage[utf8]{inputenc} 
\usepackage[T1]{fontenc}    
\usepackage{hyperref}       
\usepackage{url}            
\usepackage{booktabs}       
\usepackage{amsfonts}       
\usepackage{nicefrac}       
\usepackage{microtype}      
\usepackage{xcolor}         
\usepackage[T1]{fontenc}    
\usepackage{amsmath, amssymb, amsthm}
\usepackage{enumitem}

\usepackage[table]{xcolor}
\usepackage{multirow}
\usepackage{array}
\newcolumntype{y}[1]{>{\raggedright\arraybackslash}p{#1pt}}
\newcolumntype{x}[1]{>{\centering\arraybackslash}p{#1pt}}
\definecolor{mygray}{gray}{0.9}
\usepackage{graphicx}
\usepackage{enumitem}
\newcommand{\rot}[1]{\rotatebox{45}{\makebox[4em][c]{#1}}}
\usepackage{wrapfig}
\usepackage{booktabs}
\usepackage{subcaption}
\usepackage{siunitx}
\usepackage{tabularx} 
\sisetup{detect-all, table-number-alignment=center, table-format=4.1}

\title{
Memory-Free Continual Learning\\
with Null Space Adaptation\\
for Zero-Shot Vision-Language Models
}


\author{%
  Yujin Jo and Taesup Kim \\
  Graduate School of Data Science\\
  Seoul National University\\
}

\iclrfinalcopy 
\begin{document}

\maketitle

\makeatletter
\fancyhead{}  
\lhead{Preprint.}
\makeatother

\begin{abstract}
Pre-trained vision-language models (VLMs), such as CLIP, have demonstrated remarkable zero-shot generalization, enabling deployment in a wide range of real-world tasks without additional task-specific training.
However, in real deployment scenarios with evolving environments or emerging classes, these models inevitably face distributional shifts and novel tasks.
In such contexts, static zero-shot capabilities are insufficient, and there is a growing need for continual learning methods that allow models to adapt over time while avoiding catastrophic forgetting.
We introduce NuSA-CL (Null Space Adaptation for Continual Learning), a lightweight memory-free continual learning framework designed to address this challenge.
NuSA-CL employs low-rank adaptation and constrains task-specific weight updates to lie within an approximate null space of the model's current parameters.
This strategy minimizes interference with previously acquired knowledge, effectively preserving the zero-shot capabilities of the original model.
Unlike methods relying on replay buffers or costly distillation, NuSA-CL imposes minimal computational and memory overhead, making it practical for deployment in resource-constrained, real-world continual learning environments.
Experiments show that our framework not only effectively preserves zero-shot transfer capabilities but also achieves highly competitive performance on continual learning benchmarks. 
These results position NuSA-CL as a practical and scalable solution for continually evolving zero-shot VLMs in real-world applications.

\end{abstract}

\section{Introduction}
Vision-language foundation models such as CLIP\citep{clip} have brought about a major shift in artificial intelligence by enabling zero-shot generalization. Their powerful text-image aligned representations now serve as the perceptual core for a new generation of systems, including Multimodal Large Language Models (MLLMs) like LLaVA\citep{liu2023visualinstructiontuning, liu2024improvedbaselinesvisualinstruction} and Vision-Language Action (VLA) models for robotics\citep{kim2024openvlaopensourcevisionlanguageactionmodel, shukor2025smolvlavisionlanguageactionmodelaffordable}.
However, these advanced systems inherit a critical limitation from their static backbones: In settings where data distributions and user requirements are constantly evolving, their knowledge is frozen. 

To bridge the gap between static foundation models and the demands of real-world deployment without resorting to massive retraining, Continual Learning (CL) has emerged as a promising solution. CL allows models to incrementally acquire new knowledge while preventing catastrophic forgetting of both pre-trained and previously learned tasks. Existing CL paradigms, however, face a fundamental scalability wall. Storage-based methods, which rely on experience replay or reference data\citep{sahaGradientProjectionMemory2021,wangTrainingNetworksNull2021}, are inherently constrained by storage costs that grow linearly with the number of tasks. On the other hand, expansion-based methods introduce new modules for each task, such as adapters or prompts\citep{yuBoostingContinualLearning2024,tangMindInterferenceRetaining2024}, forcing a model's parameters and architectural complexity to grow unbounded over time. While effective on short-term benchmarks, these dominant approaches are ill-suited for true lifelong learning. Even many parameter-efficient fine-tuning (PEFT) techniques still depend on explicitly storing prior task information to mitigate interference\citep{liangInfLoRAInterferenceFreeLowRank2024,luVisualPromptTuning2024}.

We argue that overcoming this scalability wall requires a paradigm shift from relying on external resources to enabling a model to adapt using only its intrinsic structure. We propose NuSA-CL (Null Space Adaptation for Continual Learning), a continual learning framework that enables a model with a fixed capacity to efficiently reorganize its own knowledge to accommodate new information. NuSA-CL dynamically identifies an underutilized null space in the model's current weights via SVD before each new task and strictly confines all weight updates within these interference-free dimensions throughout training.

This data-agnostic process concludes by merging the update into the backbone, maintaining a fixed parameter budget. By preserving the model's core knowledge, NuSA-CL enables stable continual adaptation, offering the ultimate form of scalability with zero storage overhead, zero auxiliary model load, and zero parameter growth which is a crucial set of properties for resource-constrained environments such as autonomous agents or on-device AI.

\begin{figure}[t!]
    \centering
    \includegraphics[width=0.85\columnwidth]{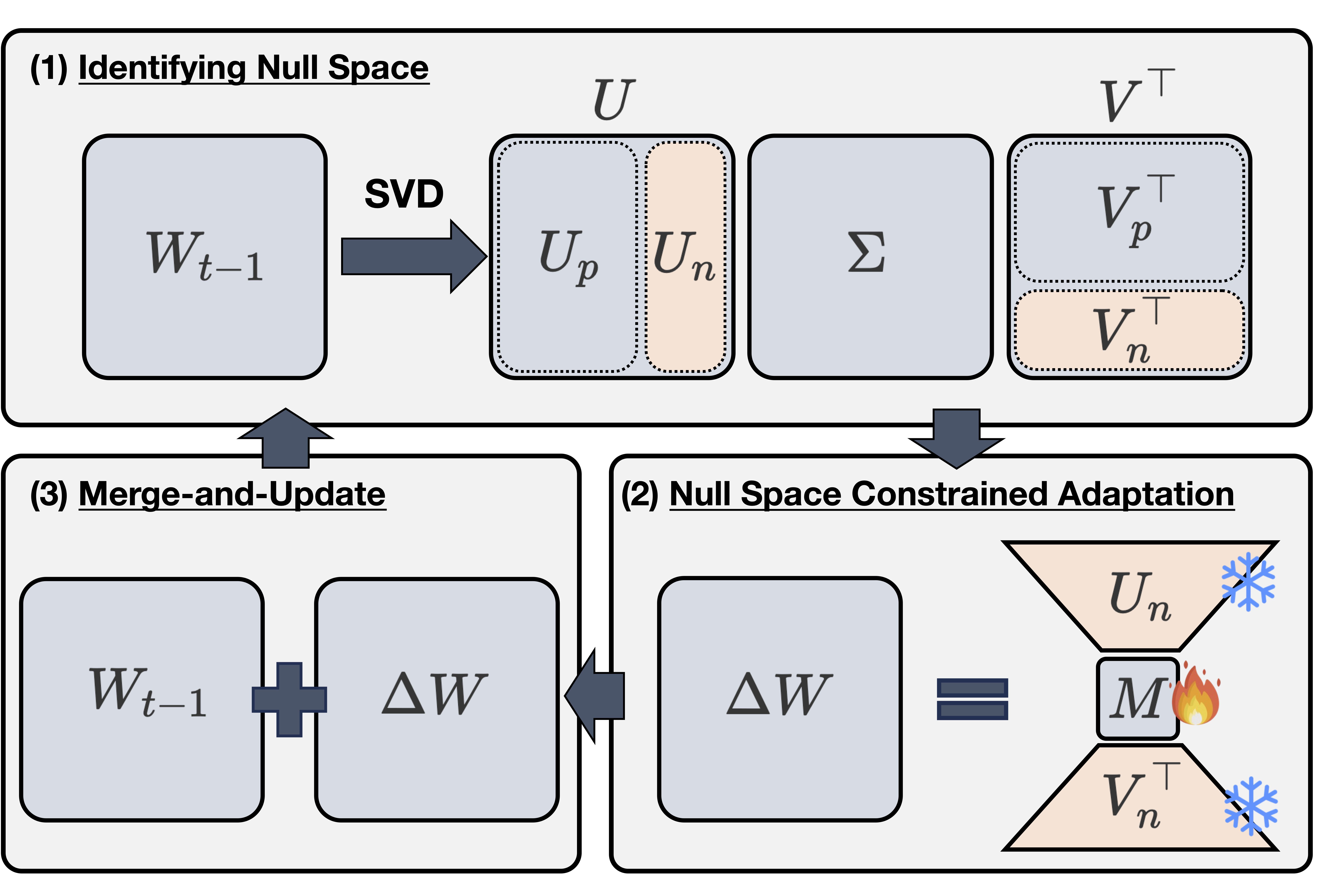} 
    \caption{\textbf{The NuSA-CL framework.} Starting with the weights from the previous task $W_{t-1}$, we first perform SVD to identify the intrinsic null space. A new low-rank update $\Delta W_t$ is then learned under a persistent constraint that confines it to this space. Finally, the update is merged to produce the new weights $W_t \leftarrow W_{t-1} + \Delta W$, completing the cycle.}
    \label{fig:method_overview}
\end{figure}

Our contributions are summarized as follows:
\begin{itemize}[leftmargin=2em]
    \item We propose NuSA-CL, a novel memory-free and resource-efficient continual learning method for vision-language foundation models, designed to operate effectively in resource-constrained environments without relying on memory buffers or knowledge distillation.
    \item NuSA-CL introduces a null space-constrained low-rank update strategy that integrates new knowledge into an approximate null space of the pre-trained parameters, thereby preserving zero-shot generalization while enabling stable and incremental learning.    
    \item Our method demonstrates significant computational and memory efficiency, making it well-suited for real-world applications that demand lifelong adaptability under limited resources.
\end{itemize}

\section{Related Work}
\label{sec:related_work}

\subsection{Continual Learning with Parameter-Efficient Fine-Tuning}
Continual learning (CL) aims to adapt models to a sequence of tasks without the catastrophic forgetting of previously acquired knowledge~\citep{li2017lwf, rebuffi2017icarl, chaudhry2019tinyepisodicmemoriescontinual, ding2022lwfvr}. A central challenge is the \textit{stability-plasticity trade-off}, which is amplified in foundation models like CLIP, where forgetting undermines not only past tasks but also the general-purpose zero-shot capabilities acquired during pre-training~\citep{wortsman2022robustwiseft, zhang2023slcaslowlearnerclassifier, tan2024semanticallyshiftedincrementaladaptertuningcontinual}. While full fine-tuning methods like ZSCL~\citep{zhengPreventingZeroShotTransfer2023} can be effective, they often require resource-intensive techniques, hindering their scalability.

Parameter-Efficient Fine-Tuning (PEFT) offers a lightweight alternative by restricting updates to a small subset of parameters. This family includes prompt-based methods that isolate task-specific knowledge~\citep{wang2022learning, wang2022dualprompt} and adapter-based methods that insert small, trainable modules for each task~\citep{yuBoostingContinualLearning2024, tangMindInterferenceRetaining2024, wuSLoRAScalableLowRank2025, weiOnlineLoRATaskfreeOnline2024}. However, many of these approaches still externalize new knowledge into task-specific modules, which contributes to parameter growth over long task sequences. In contrast, our work focuses on adapting the model's core weights within a fixed parameter budget.

\subsection{Orthogonal Projection and Null Space Approaches}
To explicitly mitigate interference, orthogonal projection techniques constrain parameter updates to subspaces that are orthogonal to those encoding prior knowledge. Prior works typically project new updates away from subspaces identified using stored information, such as past data, features, or gradients \citep{wangTrainingNetworksNull2021, sahaGradientProjectionMemory2021, zhaoRethinkingGradientProjection2023}. For instance, InfLoRA \citep{liangInfLoRAInterferenceFreeLowRank2024} adapts this concept to LoRA but still necessitates a memory bank of past gradients to enforce orthogonality.
This reliance on external memory contrasts with our strictly memory-free approach. Our method is distinct in that it derives the approximate null space intrinsically from the model's current weight structure via SVD, requiring no access to or storage of past data, features, or gradients.

\subsection{SVD-Guided Adaptation in Foundation Models}
Several recent methods explore using the spectral properties of a model's weights to guide adaptation, primarily for single-task fine-tuning~\citep{ lingamSVFTParameterEfficientFineTuning2024, yangCorDAContextOrientedDecomposition2025, tangLoRANullLowRankAdaptation2025}. These approaches involve adapting either principal components \citep{mengPiSSAPrincipalSingular2024} or, conversely, minor low-energy components to minimize interference with pre-trained knowledge \citep{wangMiLoRAHarnessingMinor2025}. While insightful, these methods differ from our work in two fundamental ways. First, they are designed for single-task adaptation, not the long-term, sequential learning required in CL. Second, and most critically, they use the low-energy subspace only for initialization, allowing the weight updates to deviate from this subspace during training. Our method, in contrast, enforces a persistent constraint, ensuring updates are strictly confined to the dynamically identified null space. This enables stable, lifelong learning within a fixed-capacity model.

\section{Method: Null Space Adaptation for Continual Learning}
\label{sec:method}
The core of NuSA-CL is a cyclical, data-agnostic adaptation process that enables a model to learn from a new task while preserving previously acquired knowledge. For each task in a sequence, the process unfolds in three stages as illustrated in Figure~\ref{fig:method_overview}: 
\textbf{(1) Null Space Identification via SVD:} We begin with the model's current weights, $W_{t-1}$, and perform Singular Value Decomposition (SVD) to identify a low-energy subspace, that is, the intrinsic null space where prior knowledge is minimally encoded.
\textbf{(2) Constrained Adaptation:} We then train a task-specific, low-rank update, $\Delta W_t$, for the current task. Crucially, this update is persistently constrained to lie strictly within the identified null space throughout training. 
\textbf{(3) Weight Merging:} After training, the learned update is merged directly into the backbone weights, producing the updated model $W_t \leftarrow W_{t-1} + \Delta W_t$. This evolved model, with its fixed parameter budget, then serves as the starting point for the next task, where the cycle repeats.

\subsection{Identifying The Intrinsic Null Space}
\label{sec:method_svd}

Let $W \in \mathbb{R}^{m \times n}$ be a weight matrix from the model. We compute its SVD, $W = U\Sigma V^\top$, to analyze its spectral properties. We posit that the principal components, associated with high-energy singular values, encode the core knowledge of the model. To provide a principled basis for our approach, we first verify the existence of a sufficiently large low-energy subspace. We identify the dimension $k$ of the principal space by finding the smallest integer that captures at least a $\rho$ fraction of the total spectral energy:
\begin{equation}
    \sum_{i=1}^{k} \sigma_i^2 \geq \rho \cdot \|W\|_F^2
    \label{eq:energy_threshold}
\end{equation}
The remaining $d-k$ dimensions constitute the intrinsic null space, spanned by the basis vectors $(U_n, V_n)$. For practical stability and to maintain a consistent number of trainable parameters across all layers and tasks, we cap the dimension of our update by a hyperparameter $r_{\max}$. The effective rank $r$ of our update is thus defined as $r = \min\bigl(d-k,\,r_{\max}\bigr)$.

We can now express the decomposition as:
\begin{equation}
U = [U_p \; U_n], \quad V = [V_p \; V_n], \quad \Sigma = \begin{bmatrix} 
\Sigma_p & 0 \\ 0 & \Sigma_n 
\end{bmatrix}, 
\end{equation}
where $(U_p, V_p)$ represent the top-$k$ components of the principal subspace, and $(U_n, V_n)$ span the approximate null space.

\subsection{Constrained Adaptation within the Null Space}
\label{sec:method_constrained}

To prevent interference with existing knowledge during continual learning, we impose a \textit{persistent constraint} that strictly confines all new parameter updates to the identified null space. We formulate the task-specific adaptation as a LoRA-like low-rank update $\Delta W \in \mathbb{R}^{m \times n}$, but with a critical modification. Instead of learning two projection matrices, we define the update as:
\begin{equation}
    \Delta W = U_n M V_n^\top
    \label{eq:constrained_update}
\end{equation}
Here, the basis matrices $U_n$ and $V_n$ are derived from the SVD of the frozen weight $W$ and are themselves kept frozen during training for the current task. The intermediate matrix $M \in \mathbb{R}^{r \times r}$ is the only trainable component and is initialized as a zero matrix for each new task. This formulation ensures that the update $\Delta W$ is mathematically guaranteed to be orthogonal to the principal subspace of $W$, thereby minimizing interference. This persistent constraint is a key distinction from prior work~\citep{wangMiLoRAHarnessingMinor2025} that uses such subspaces only for initialization, after which updates are free to deviate.

\subsection{Continual Adaptation via Update Merging}
\label{sec:method_merging}

A core component of NuSA-CL's scalability is its ability to operate within a fixed parameter budget. This is achieved by merging the learned low-rank update $\Delta W$ directly into the base weights after training on each task is complete. For a given task $t$, the new weight matrix $W_t$ is computed as:
\begin{equation}
    W_t \leftarrow W_{t-1} + \Delta W_t
\end{equation}
This update-and-merge cycle allows the model to sequentially accumulate knowledge from new tasks without adding any new parameters or modules. The resulting model, with its updated weights $W_t$, then becomes the starting point for the next task, $t+1$. At the beginning of the new task, the entire process repeats: the now-updated weights $W_t$ are decomposed via SVD to identify a new intrinsic null space, ensuring that the model is always adapting in directions that are least disruptive to its full, accumulated knowledge.

\section{Theoretical Motivation}
\label{sec:theory}

In this section, we provide a theoretical motivation for our approach. We analyze the degree of interference in \textit{parameter space} to demonstrate how our persistent constraint provides a principled mechanism for mitigating catastrophic forgetting. Our analysis shows that by freezing the null space basis vectors $(U_n, V_n)$ and only learning the small intermediate matrix $M$ defined in Eq.~\ref{eq:constrained_update}, the update direction is guaranteed to be nearly orthogonal to the dominant components of the existing model weights.

\subsection{Interference Bound for a Single Update}
We first present a lemma that characterizes the interaction between the existing weights and a single task-specific update.

\begin{lemma}[Bounded Interference via Null Space Constraint]
Let $W = U\Sigma V^\top$ be the SVD of a weight matrix, and let $\Delta W = U_n M V_n^\top$ be an update restricted to its intrinsic null space. The interference in parameter space, measured by the Frobenius inner product, is bounded by:
\begin{equation}
    |\langle W, \Delta W \rangle_F| \le \sigma_{\text{max}}^{\text{null}} \cdot \|M\|_F
\end{equation}
where $\sigma_{\text{max}}^{\text{null}} := \sigma_{k+1}$ is the largest singular value within the null space.
\label{lemma:interference}
\end{lemma}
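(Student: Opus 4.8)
The plan is to collapse the Frobenius inner product onto the null-space block and then extract the single scalar $\sigma_{\text{max}}^{\text{null}}$. First I would rewrite the quantity of interest as a trace, $\langle W, \Delta W\rangle_F = \Tr(W^\top \Delta W)$, and substitute the constrained form $\Delta W = U_n M V_n^\top$ from Eq.~\ref{eq:constrained_update}. Moving the frozen bases outside the trace by cyclicity gives the compressed identity $\langle W, \Delta W\rangle_F = \langle U_n^\top W V_n,\, M\rangle_F$, so the entire estimate reduces to understanding the $r \times r$ matrix $U_n^\top W V_n$.

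The key step is to evaluate this compressed matrix using the block structure $W = U_p \Sigma_p V_p^\top + U_n \Sigma_n V_n^\top$. Here I would invoke the orthonormality of the SVD factors, namely $U_n^\top U_p = 0$, $V_p^\top V_n = 0$, and $U_n^\top U_n = V_n^\top V_n = I_r$. The principal term $U_p \Sigma_p V_p^\top$ is annihilated from both sides, leaving $U_n^\top W V_n = \Sigma_n$. Consequently $\langle W, \Delta W\rangle_F = \langle \Sigma_n, M\rangle_F = \Tr(\Sigma_n M) = \sum_{a} \sigma_{k+a}\, M_{aa}$, so that only the diagonal of $M$, weighted by the null-space singular values, survives. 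This is the structural heart of the lemma: the construction forces the high-energy directions to drop out exactly, which is precisely the orthogonality to the principal subspace claimed in the statement.

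The final step, and the one I expect to be the main obstacle, is bounding this reduced trace by $\sigma_{\text{max}}^{\text{null}}\,\|M\|_F$. Since $\Sigma_n$ is diagonal with ordered entries $\sigma_{k+1} \ge \sigma_{k+2} \ge \cdots$, its operator norm equals $\sigma_{k+1} = \sigma_{\text{max}}^{\text{null}}$, and the natural tool is the trace (von Neumann / H\"older) inequality $|\Tr(\Sigma_n M)| \le \|\Sigma_n\|_{\mathrm{op}}\,\|M\|_*$, which cleanly isolates the single scalar $\sigma_{k+1}$. The delicate point is the norm paired with $M$: this factorization produces the nuclear norm $\|M\|_*$, whereas a direct Cauchy--Schwarz on $\langle \Sigma_n, M\rangle_F$ instead produces $\|\Sigma_n\|_F\,\|M\|_F$ with the full null-spectrum factor. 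To land exactly on the stated $\sigma_{\text{max}}^{\text{null}}\,\|M\|_F$ I would either restrict to the low-rank, small-update regime in which the method actually operates (where the relevant norms of $M$ agree up to the intended constant) or present the operator-norm factorization and record the relation $\|M\|_F \le \|M\|_*$. Either way the qualitative conclusion is robust: parameter-space interference is governed by the largest null-space singular value $\sigma_{k+1}$, which the energy-threshold rule of Eq.~\ref{eq:energy_threshold} keeps small by construction.
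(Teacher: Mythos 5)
Your reduction is exactly the paper's: both you and the authors expand $\langle W,\Delta W\rangle_F=\Tr(W^\top\Delta W)$, use the orthonormality of the SVD blocks to annihilate the principal term, and arrive at the compressed quantity $\Tr(\Sigma_n M)$. Where the two part ways is the final inequality, and you have correctly put your finger on the real issue. The paper simply asserts $\Tr(\Sigma_n M)\le\|\Sigma_n\|_2\,\|M\|_F$, but this inequality is false in general: with $\Sigma_n=\sigma I_r$ and $M=I_r$ one gets $\Tr(\Sigma_n M)=r\sigma$ while $\|\Sigma_n\|_2\,\|M\|_F=\sigma\sqrt{r}$, so the lemma as stated fails whenever $r>1$ and $M$ has a substantial diagonal. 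The two legitimate bounds are precisely the ones you name: Cauchy--Schwarz gives $|\Tr(\Sigma_n M)|\le\|\Sigma_n\|_F\,\|M\|_F\le\sqrt{r}\,\sigma_{\text{max}}^{\text{null}}\,\|M\|_F$, and von Neumann/H\"older gives $|\Tr(\Sigma_n M)|\le\|\Sigma_n\|_2\,\|M\|_*$.

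That said, neither of your proposed repairs actually recovers the stated constant. The relation $\|M\|_F\le\|M\|_*$ points in the wrong direction---it lets you weaken $\sigma_{\text{max}}^{\text{null}}\|M\|_F$ to $\sigma_{\text{max}}^{\text{null}}\|M\|_*$, not conclude the former from the latter---and ``restricting to the low-rank, small-update regime'' is a heuristic, not a proof. The honest resolution is to restate the bound with $\|M\|_*$ in place of $\|M\|_F$, or with an explicit factor of $\sqrt{r}$ (equivalently $\sqrt{\mathrm{rank}(M)}$); either form preserves the qualitative message that parameter-space interference is governed by the small null-space singular values. In summary: your derivation up to $\Tr(\Sigma_n M)$ matches the paper and is correct, your diagnosis of the last step exposes a genuine error in the paper's own proof rather than a gap in yours, and the only thing missing is committing to one of the corrected statements instead of trying to salvage the original one.
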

\begin{proof}
Expanding the inner product:
$\langle W, \Delta W \rangle_F = \mathrm{Tr}(W^\top \Delta W) = \mathrm{Tr}(V \Sigma U^\top U_n M V_n^\top)$. 
Since $U^\top U_n = [0; I_r]$ and $V^\top V_n = [0; I_r]$, the trace simplifies to:
$\mathrm{Tr}(\Sigma_n M) \leq \|\Sigma_n\|_2 \cdot \|M\|_F = \sigma_{\max}^{{null}} \cdot \|M\|_F$ .
\end{proof}

\subsection{Forgetting Control in Continual Learning}
The above lemma naturally generalizes to the continual learning setting, where multiple tasks are learned sequentially.
\begin{theorem}[Cumulative Interference Bound]
Let $W_t = W_{t-1} + \Delta W_t$, where $\Delta W_t = U_{t-1,n} M_t V_{t-1,n}^\top$ is the update for task $t$. The cumulative interference across $T$ tasks is bounded by:
\begin{equation}
    \sum_{t=1}^{T} |\langle W_{t-1}, \Delta W_t \rangle_F| \le \sum_{t=1}^{T} \sigma_{t,\text{max}}^{\text{null}} \cdot \|M_t\|_F.
\end{equation}
\label{theorem:cumulative}
\end{theorem}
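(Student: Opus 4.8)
The plan is to reduce the theorem directly to Lemma~\ref{lemma:interference} applied termwise. The essential observation is structural: at each task $t$, the update is defined as $\Delta W_t = U_{t-1,n} M_t V_{t-1,n}^\top$, where the basis pair $(U_{t-1,n}, V_{t-1,n})$ is obtained from the SVD of the \emph{current} backbone weights $W_{t-1}$, recomputed afresh at the start of task $t$ per the update-and-merge cycle of Section~\ref{sec:method_merging}. Consequently $\Delta W_t$ lies in the intrinsic null space of $W_{t-1}$ in exactly the sense the lemma requires, since the lemma makes no assumption about how $W_{t-1}$ was generated, only that the update is confined to the null space of the matrix it is paired with inside the inner product.

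First I would fix an arbitrary task index $t$ and instantiate Lemma~\ref{lemma:interference} with $W := W_{t-1}$ and $\Delta W := \Delta W_t$. Because $W_{t-1} = U_{t-1}\Sigma_{t-1}V_{t-1}^\top$ and $\Delta W_t = U_{t-1,n} M_t V_{t-1,n}^\top$ match the hypotheses verbatim, the lemma yields $|\langle W_{t-1}, \Delta W_t \rangle_F| \le \sigma_{t,\text{max}}^{\text{null}} \cdot \|M_t\|_F$, where $\sigma_{t,\text{max}}^{\text{null}} := \sigma_{k_t+1}$ is the largest singular value lying in the null space of $W_{t-1}$ and $k_t$ is the principal dimension selected by the energy threshold of Eq.~\ref{eq:energy_threshold} for that task. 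Next I would sum this per-task bound over $t = 1, \dots, T$. Since each summand on the left is already a nonnegative absolute value, the summation is immediate and no cross-task triangle inequality is needed; termwise addition of the $T$ inequalities gives $\sum_{t=1}^{T} |\langle W_{t-1}, \Delta W_t \rangle_F| \le \sum_{t=1}^{T} \sigma_{t,\text{max}}^{\text{null}} \cdot \|M_t\|_F$, which is the claimed bound.

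I do not anticipate a genuine analytic obstacle, since the result is a direct corollary of the single-update lemma. The one point requiring care, and the only place where the continual setting enters nontrivially, is verifying that the hypotheses of Lemma~\ref{lemma:interference} remain valid at \emph{every} step despite $W_{t-1}$ itself being the accumulation $W_0 + \sum_{s<t}\Delta W_s$ of all prior updates. This is ensured by the design of the cycle: the SVD is recomputed on the merged weights $W_{t-1}$ before task $t$, so $(U_{t-1,n}, V_{t-1,n})$ is by construction a valid null-space basis for $W_{t-1}$, and the orthogonality relations $U_{t-1}^\top U_{t-1,n} = [0; I_r]$ and $V_{t-1}^\top V_{t-1,n} = [0; I_r]$ that drive the lemma's trace simplification hold exactly. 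I would state this verification explicitly, to make clear that the bound is applied per step with the same tightness as the lemma, rather than relying on any relationship between the null spaces of consecutive tasks.
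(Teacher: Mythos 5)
Your proposal is correct and matches the paper's intended argument exactly: the paper treats the theorem as an immediate consequence of Lemma~\ref{lemma:interference} applied to each pair $(W_{t-1},\Delta W_t)$ and summed, which is precisely what you do. Your explicit check that the per-step hypotheses hold because the SVD basis is recomputed from the merged weights $W_{t-1}$ before each task is a welcome addition that the paper leaves implicit.
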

This result demonstrates that by constraining updates to low-energy subspaces, NuSA-CL bounds the cumulative parameter-level interference across tasks. This provides a principled mechanism for mitigating catastrophic forgetting, as it minimizes disruptions to the dominant weight structures that encode prior knowledge.

\section{Experiments}

\subsection{Experimental Setup}
\paragraph{Benchmarks.}
Our primary evaluation is conducted on the \textit{Multimodal Task Incremental Learning (MTIL) benchmark}~\citep{zhengPreventingZeroShotTransfer2023}, a sequence of 11 diverse vision datasets designed to test a model's ability to learn new tasks while preserving its core zero-shot capabilities. To assess long-sequence scalability, we also evaluate on the standard \textit{Class-Incremental CIFAR100} benchmark~\citep{krizhevsky2009learning}, splitting 100 classes into sequences of 10, 20, and 50 tasks. We report three key metrics: \textit{Transfer}, the zero-shot accuracy on unseen tasks; \textit{Avg.}, the average accuracy across all tasks during training; and \textit{Last}, the final average accuracy, which measures forgetting.

\paragraph{Implementation and Baselines.}
All experiments use the CLIP ViT-B/16 backbone. Our method, NuSA-CL, identifies the null space for each task using a cumulative energy cutoff and caps the LoRA update rank at $r_{\mathrm{max}}=128$. We compare NuSA-CL against three categories of baselines. (1) \textit{Full Fine-Tuning Models} (e.g., Continual-FT, ZSCL) update all ~150M parameters and require significantly more computational resources (e.g., 4 GPUs in our experiments) compared to the single GPU usage of PEFT methods. (2) \textit{Storage-based PEFT Models} require additional, often expanding, storage, including MoE-Adapters~\citep{yuBoostingContinualLearning2024}, DIKI~\citep{tangMindInterferenceRetaining2024}, and InfLoRA~\citep{liangInfLoRAInterferenceFreeLowRank2024}. (3) \textit{Storage-Free PEFT Models}, the most practical and challenging setting, includes standard LoRA~\citep{huLoRALowRankAdaptation2021}, MiLoRA~\citep{wangMiLoRAHarnessingMinor2025}, and our method. For methodological relevance, we re-implemented the most comparable LoRA-based methods within a unified framework, applying adapters to both vision and text encoders with a consistent rank and merging them after each task.

\subsection{Results}
\begin{table*}[!t]
\centering
\caption{Performance and Computational Efficiency Analysis on the MTIL Benchmark. Boldface indicates the top storage-free performer. † indicates methods re-implemented on the CLIP architecture for fair comparison.}
\label{tab:mtil_cost}
\begin{adjustbox}{width=\textwidth, center}
\begin{tabular}{lccccccc}
\toprule
& \multicolumn{4}{c}{\textbf{Computation \& Memory Cost}} & \multicolumn{3}{c}{\textbf{Performance (\%)}} \\
\cmidrule(lr){2-5} \cmidrule(lr){6-8}
\textbf{Method} & \textbf{\# Params} & \textbf{Additional Storage} & \textbf{Peak GPU (GB)} & \textbf{GPU-Hours} & \textbf{Transfer} & \textbf{Avg.} & \textbf{Last} \\
\midrule
\multicolumn{8}{l}{\textit{Storage-based Models}} \\
\quad ZSCL~\citep{zhengPreventingZeroShotTransfer2023} & ~149.6M &  Data\&Model (10.5GB) & 43.1 & 47.24 & 68.1 & 75.4 & 83.6 \\
\quad MoE-Adapters~\citep{yuBoostingContinualLearning2024} & ~59.8M & Routers (4.8GB) & 15.5 & 3.42 & 68.9 & 76.7 & 85.0 \\
\quad DIKI~\citep{tangMindInterferenceRetaining2024} & ~1.8M & Task Stats (159MB) & 10.2 & 4.40 & 68.7 & 76.3 & 85.1 \\
\quad InfLoRA$^{\dag}$~\citep{liangInfLoRAInterferenceFreeLowRank2024} & ~7.8M & Grad. Proj. Mem. (9MB) & 6.6 & 4.29 & 66.2 & 74.2 & 83.6 \\
\midrule
\multicolumn{8}{l}{\textit{Storage-Free Models}} \\
\quad Continual-FT & ~149.6M & None & 14.6 & 12.76 & 44.6 & 55.9 & 77.3 \\
\quad LoRA$^{\dag}$~\citep{huLoRALowRankAdaptation2021} & ~15.7M & None & 6.7 & \textbf{1.21} & 63.9 & 70.1 & 79.9 \\
\quad MiLoRA$^{\dag}$~\citep{wangMiLoRAHarnessingMinor2025} & ~15.7M & None & 6.7 & 1.24 & 62.8 & 68.7 & 77.4 \\
\rowcolor{gray!20}
\quad \textbf{NuSA-CL (Ours)} & \textbf{~1.5M} & None & \textbf{6.6} & \textbf{1.21} & \textbf{68.6} & \textbf{75.1} & \textbf{82.8} \\
\bottomrule
\end{tabular}
\end{adjustbox}
\end{table*}
\begin{table*}[!t]
\setlength\tabcolsep{5pt}
\centering
\setlength{\belowcaptionskip}{2mm}
\caption{Transfer, Avg., and Last (\%) for Storage-free PEFT continual learning methods on the \emph{5‐shot} MTIL benchmark. † indicates methods re-implemented on the CLIP architecture for fair comparison.}
\label{tab:mtil_fewshot}
{
\fontsize{8pt}{10pt}\selectfont
\resizebox{\textwidth}{!}{
\begin{tabular}{y{105}*{11}{x{17}}x{22}} 
\toprule
 & \rot{Aircraft} & \rot{Caltech101} & \rot{CIFAR100} & \rot{DTD} & \rot{EuroSAT} & \rot{Flowers} & \rot{Food} & \rot{MNIST} & \rot{OxfordPet} & \rot{Cars} & \rot{SUN397} & {\textbf{AVG.}} \\
\midrule
\quad Zero-shot CLIP    & 24.3 & 88.4 & 68.2 & 44.6 & 54.9 & 71.0 & 88.5 & 59.4 & 89.0 & 64.7 & 65.2 & 65.3 \\
\midrule\midrule
\multicolumn{13}{l}{\textbf{Transfer}} \\
\midrule
\quad LoRA$^{\dag}$~\citep{huLoRALowRankAdaptation2021}    &  -  & 85.9 & 63.0 & 42.3 & 39.5 & 55.9 & 80.5 & \textbf{63.7} & 76.0 & 49.5 & 57.1 & 60.4 \\
\quad MiLoRA$^{\dag}$~\citep{wangMiLoRAHarnessingMinor2025}     & - & 84.8 & 59.6 & 43.2 & 37.7 & 50.6 & 78.3 & 61.4 & 80.2 & 42.4 & 55.6 & 59.4 \\
\quad InfLoRA$^{\dag}$~\citep{liangInfLoRAInterferenceFreeLowRank2024} &  -   &  87.2 & 65.9 & \textbf{44.5} & 52.1 & 64.3 & 85.3 & 63.4 & 83.8 & 58.9 & 62.5 & 66.8 \\
\rowcolor{gray!20}
\quad \textbf{NuSA-CL (Ours)} & -   &   \textbf{88.2} & \textbf{67.5} & 43.3 & \textbf{55.8} & \textbf{65.3} & \textbf{86.2} & 62.8 & \textbf{84.9} & \textbf{62.2} & \textbf{64.7} & \textbf{68.1}  \\
\midrule\midrule
\multicolumn{13}{l}{\textbf{Avg.}} \\
\midrule
\quad LoRA$^{\dag}$~\citep{huLoRALowRankAdaptation2021}       & 15.9 & 90.6 & 68.1 & 54.1 & 69.1 & 74.3 & 81.8 & \textbf{73.6} & 81.9 & 58.3 & 62.0 & 66.8 \\
\quad MiLoRA$^{\dag}$~\citep{wangMiLoRAHarnessingMinor2025} & 13.6 & 89.4 & 66.8 & 53.8 & 66.4 & 73.5 & 81.4 & 71.9 & 83.5 & 56.2 & 62.5 & 66.0 \\
\quad InfLoRA$^{\dag}$~\citep{liangInfLoRAInterferenceFreeLowRank2024} & 18.7 & \textbf{91.0} & 73.0 & 55.4 & 67.8 & \textbf{78.2} & 86.1 & 72.7 & 85.1 & 61.1 & 63.4 & 68.9 \\
\rowcolor{gray!20}
\quad \textbf{NuSA-CL (Ours)} & \textbf{28.9} & 90.5 & \textbf{73.2} & \textbf{56.1} & \textbf{71.9} & 76.9 & \textbf{87.2} & 72.9 & \textbf{86.8} & \textbf{63.5} & \textbf{65.3} & \textbf{70.3} \\
\midrule\midrule
\multicolumn{13}{l}{\textbf{Last}} \\
\midrule
\quad LoRA$^{\dag}$~\citep{huLoRALowRankAdaptation2021}       & 21.3 & 89.6 & 65.3 & 58.0 & 76.8 & 83.9 & 83.2 & \textbf{90.4} & 85.6 & 67.4 & 72.1 & 72.2 \\
\quad MiLoRA$^{\dag}$~\citep{wangMiLoRAHarnessingMinor2025} & 17.5 & 88.8 & 66.0 & 56.3 & 72.3 & 82.4 & 82.1 & 87.3 & 87.7 & 68.2 & 72.0 & 71.0 \\
\quad InfLoRA$^{\dag}$~\citep{liangInfLoRAInterferenceFreeLowRank2024} & 21.9 & \textbf{91.3} & 73.3 & 58.9 & 77.8 & \textbf{90.0} & \textbf{87.9} & 88.6 & 89.8 & \textbf{71.3} & \textbf{72.6} & 74.8 \\
\rowcolor{gray!20}
\quad \textbf{NuSA-CL (Ours)} & \textbf{27.2} & 90.2 & \textbf{74.0} & \textbf{59.7} & \textbf{81.3} & 85.9 & \textbf{88.9} & 90.2 & \textbf{92.0} & 69.1 & 71.2 & \textbf{75.4} \\
\bottomrule
\end{tabular}
}
}
\end{table*}
\paragraph{NuSA-CL Demonstrates a Superior Efficiency-Performance Tradeoff.}
Table~\ref{tab:mtil_cost} presents our main results on the full-shot MTIL benchmark. As shown, storage-based PEFT methods like MoE-Adapters~\citep{yuBoostingContinualLearning2024} achieve the highest final accuracy. However, this comes at a significant cost: MoE-Adapters requires nearly 60M parameters and an expanding router library, while ZSCL~\citep{zhengPreventingZeroShotTransfer2023} incurs a massive computational overhead of 47.24 GPU-Hours. In contrast, NuSA-CL establishes a new state-of-the-art within the practical and challenging storage-free setting, significantly outperforming other competitors like LoRA and MiLoRA. The key finding is that NuSA-CL achieves performance highly competitive with the storage-based SOTA while being orders of magnitude more efficient. Specifically, compared to MoE-Adapters, NuSA-CL uses ~40x fewer parameters (1.5M vs. 59.8M), zero additional storage, less than half the peak GPU memory, and is nearly 3x faster (1.21 vs. 3.42 GPU-Hours). This result highlights a vastly superior performance-to-cost tradeoff, positioning NuSA-CL as a powerful and scalable solution.

\paragraph{NuSA-CL Excels in Data-Efficient, Few-Shot Learning.}
To further probe the robustness of our approach, we conduct a focused analysis on the challenging 5-shot MTIL benchmark, with results in Table~\ref{tab:mtil_fewshot}. As established in Table~\ref{tab:mtil_cost}, Storage-free PEFT strategies represent the most practical paradigm for efficient continual learning. We therefore focus our analysis on this category to determine which low-rank adaptation strategy is most effective when data is scarce. This setting acts as a stress test, magnifying the fundamental differences between each approach. The results clearly demonstrate the superiority of our null-space adaptation strategy. NuSA-CL achieves the best performance across all summary metrics, decisively outperforming InfLoRA, the strongest competitor in this group. This indicates that our persistent null-space constraint is a fundamentally more robust and data-efficient strategy for mitigating forgetting than alternatives like subspace initialization (MiLoRA) or gradient projection (InfLoRA), validating the core mechanism of NuSA-CL.

\paragraph{Scalability in Long-Sequence Incremental Learning.}
Finally, to address the critical question of long-sequence scalability, we evaluate NuSA-CL on the Class-Incremental CIFAR100 benchmark (Table~\ref{tab:cifar100}). The advantage of our method becomes increasingly pronounced as the task sequence lengthens. In the most challenging 50-step scenario, NuSA-CL achieves a final `Last` accuracy of 71.85\%, significantly outperforming the strongest baseline, ZSCL, by a large margin of over 4.4\%. This result provides strong empirical evidence that our dynamic, task-wise re-computation of the null space is an effective and scalable strategy for lifelong learning, confirming the longevity of our approach even after 50 sequential tasks.

\section{Analysis}
\label{sec:analysis}

In this section, we analyze the effectiveness of NuSA-CL from multiple angles. We first visualize and explain how NuSA-CL's learning dynamics—knowledge accumulation versus overwriting—fundamentally differ from conventional methods. We then establish why adapting within the null space is a superior strategy for continual learning. Finally, we experimentally validate our core mechanisms and justify the choice of key hyperparameters.

\begin{figure}[t]
  \centering
  \includegraphics[width=0.95\textwidth]{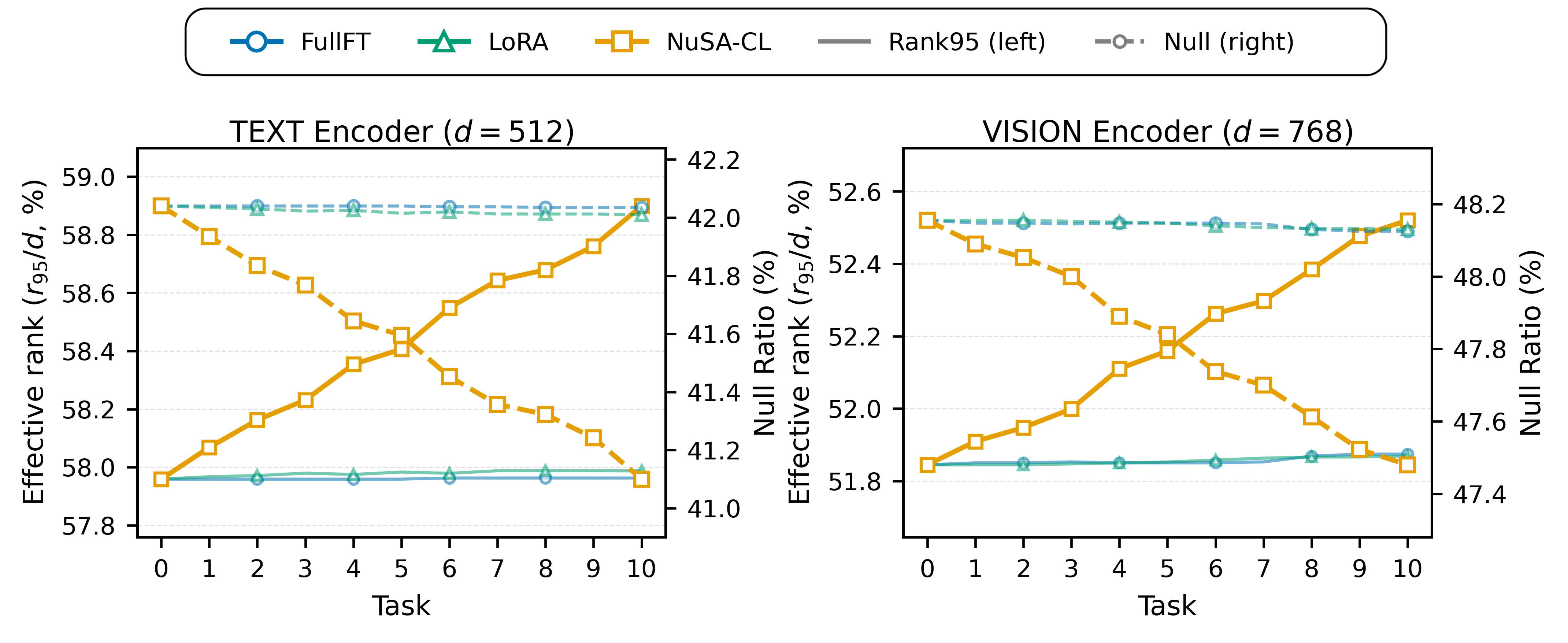}
  \caption[Knowledge Accumulation vs. Overwriting]{
    \textbf{Our method actively accumulates knowledge by utilizing the model's intrinsic capacity, while other methods overwrite it.}
    The plots track the evolution of the effective rank (solid lines, left axis) and the null ratio (dashed lines, right axis) for the text and vision encoders.
    NuSA-CL is the only method to show a steady increase in effective rank, demonstrating that it fills the underutilized null space to learn new tasks. This provides direct evidence for its ability to mitigate catastrophic forgetting.
  }
  \label{fig:nullspace-evolution}
\end{figure}

\subsection{Null Space Dynamics: Accumulation vs. Overwriting}
Figure~\ref{fig:nullspace-evolution} illustrates a fundamental divergence in the learning dynamics of NuSA-CL compared to conventional fine-tuning approaches. The plots track the model's \textbf{effective rank} (solid lines), representing the capacity used to encode core knowledge, and the \textbf{null ratio} (dashed lines), the remaining underutilized capacity. The effective rank is defined as the minimum percentage of dimensions required to capture 95\% of the weight matrix's total spectral energy ($r_{95}/d$).

Conventional methods like LoRA and Full-FT exhibit a "lazy learning" behavior. As shown in Figure~\ref{fig:nullspace-evolution} and detailed in Appendix Table~\ref{tab:nullspace_num}, their spectral properties remain almost static across all 11 tasks. For instance, the effective rank of LoRA's vision output projection layer barely changes, shifting trivially from an initial 447.42 to 447.58. This spectral inertia suggests that these methods do not exploit the model's underutilized capacity; instead, they primarily overwrite knowledge within the existing principal subspace, leaving the vast null space dormant.

In contrast, our method actively \textbf{accumulates knowledge} by progressively filling this underutilized space. For the same vision output layer, NuSA-CL's effective rank shows a clear and consistent increase. This trend, observed across all attention layers, provides direct quantitative evidence that NuSA-CL dynamically reshapes the parameter space to integrate new information. This additive learning process is the core mechanism behind NuSA-CL's ability to mitigate catastrophic forgetting and build a more informationally dense representation over time.

A natural question arises regarding the long-term viability of this approach: does the null space eventually become exhausted? Our analysis indicates that it does not. The "null space" is a low-energy spectral region, not a finite, empty container; it shrinks but is never fully depleted. The quantitative data in Appendix Table~\ref{tab:nullspace_num} confirms this. Even after learning 10 diverse and challenging tasks, the number of available null directions in the most saturated layer (vision output projection) is still 313.58. This is more than double our empirically chosen update rank ($r_{max}=128$), demonstrating that a substantial, low-interference subspace persists for future adaptation. This confirms the scalability and long-term robustness of our approach.

\subsection{Why the Null Space? Subspace Selection Strategy}

\begin{figure}[t]
    \centering
    \begin{subfigure}[b]{0.48\linewidth}
        \centering
        \includegraphics[width=\linewidth]{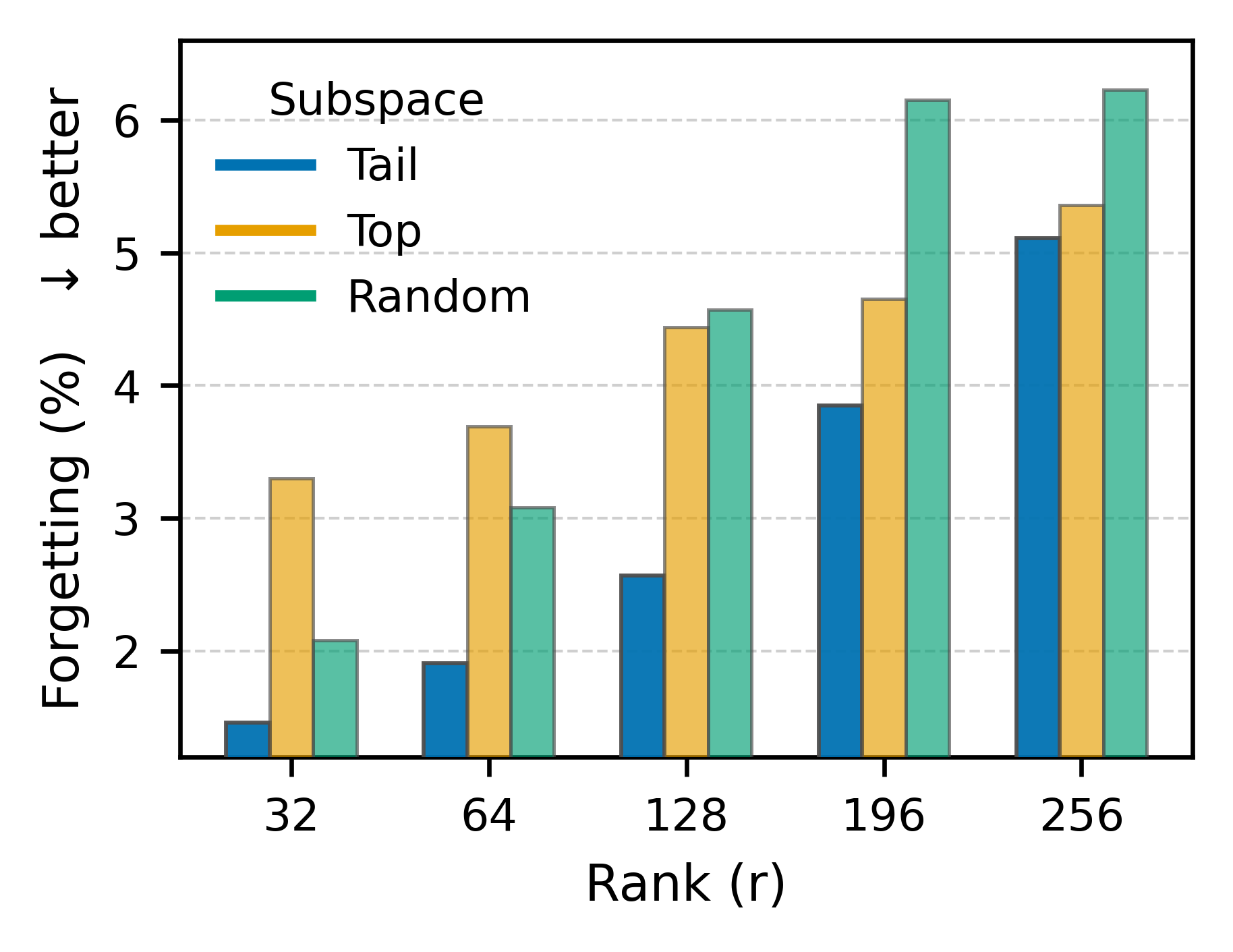}
        \caption{Subspace Selection}
        \label{fig:subspace}
    \end{subfigure}
    \hfill 
    \begin{subfigure}[b]{0.48\linewidth}
        \centering
        \includegraphics[width=\linewidth]{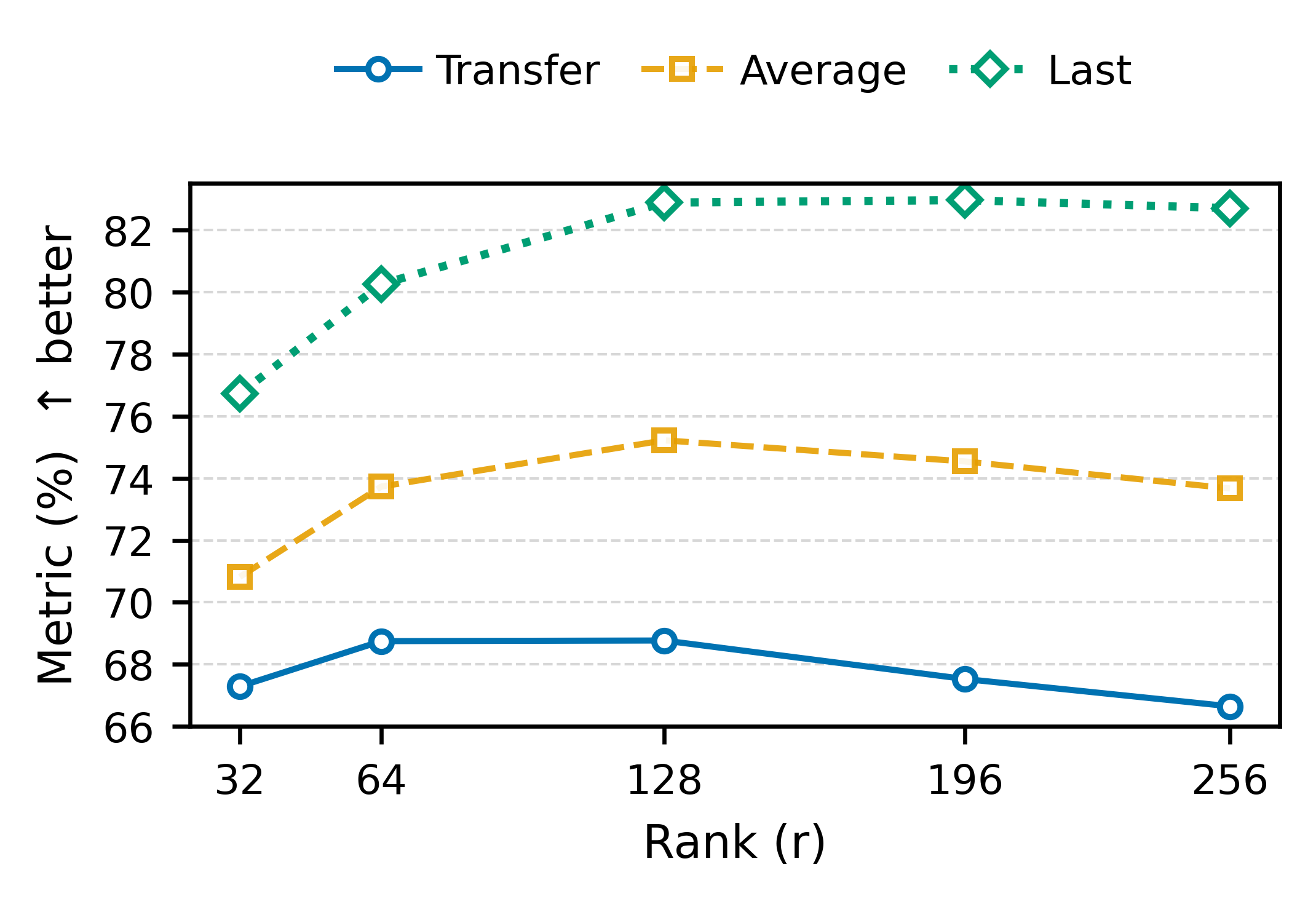}
        \caption{Maximum Update Rank ($r_{max}$)}
        \label{fig:rank}
    \end{subfigure}

    \caption{\textbf{Ablations on subspace and rank.}
    \textbf{(Left) Subspace selection.} Across ranks, \textit{Tail} (null-like) consistently yields the lowest forgetting than \textit{Top} and \textit{Random}).
    \textbf{(Right) Update rank.} On the \textit{Tail} subspace, overall CL performance peaks around $r_{\max}{=}128$ (cf.\ Transfer/Avg./Last), indicating that a moderate rank balances retention (stability) and on-task adaptation (plasticity).}
    \label{fig:ablation_combo}
\end{figure}

\paragraph{Tail outperforms Top and Random in mitigating forgetting.}
To identify an update subspace that mitigates catastrophic forgetting, we study how the choice of directions affects both knowledge retention and adaptation. We evaluate three initialization strategies for low-rank adaptation—\textit{Top} (largest singular directions), \textit{Tail} (smallest, null-like directions), and \textit{Random}—using fixed rank $r{=}128$ and $1{,}000$ training iterations per task on 11 MTIL datasets. We report \textbf{Forgetting} (average drop from post-task to final performance) in Fig.~\ref{fig:subspace} and provide full per-rank numbers in Appx.\ Table~\ref{tab:subspace}. The \textit{Tail} strategy, which exploits the null space, consistently yields the \emph{lowest} forgetting across all ranks we tested, indicating that low-energy directions provide a safe region for updates with minimal interference to previously acquired knowledge. 
Quantitatively, \textit{Tail} increases from $1.46\%$ at $r{=}32$ to $5.11\%$ at $r{=}256$, while remaining below both \textit{Top} and \textit{Random} at every rank (e.g., at $r{=}128$: \textit{Tail} $2.57\%$ vs.\ \textit{Top} $4.44\%$ and \textit{Random} $4.57\%$). 
The mild rise with larger ranks suggests that a purer, lower-dimensional null subspace better preserves past information.

\paragraph{Balancing stability and plasticity for continual learning.}
Does this imply that “the smaller (purer) the null space, the better” for continual learning? Not necessarily. As shown in Fig.~\ref{fig:rank}, overall CL performance is maximized around an update rank of $r_{\max}{=}128$, which balances stability (retention) and plasticity (on-task adaptation). Consistent with Table~\ref{tab:subspace}, we observe a stability–plasticity trade-off at \emph{both} the update-rank dimension and the subspace choice: increasing rank improves \textbf{Target} performance but induces more forgetting, and \textit{Top} attains marginally higher \textbf{Target} accuracy on the current task yet suffers substantially larger forgetting. Because continual learning is ultimately constrained by retention, these results motivate our design in NuSA-CL: operate in the null space (\textit{Tail}) and cap the update rank at $r_{\max}$ to maintain stability while preserving competitive adaptation.

\subsection{Validation of NuSA-CL's Design Principles}

\begin{table*}[t!]
    \centering
    \captionsetup{font=small}
        \caption{Ablation studies validating NuSA-CL's design principles. 
    (a) Core mechanisms such as the persistent constraint and multimodal adaptation are shown to be critical. 
    (b) The method is practical, with negligible initialization overhead, and robust to hyperparameter choices.}
    \begin{subtable}[b]{0.48\linewidth}
        \centering
        \caption{Core Mechanism Ablations}
        \label{tab:ablation_core}
        \small
        \begin{tabular}{l ccc}
            \toprule
            \multicolumn{4}{c}{\textbf{Persistent Constraint Ablation}} \\
            \cmidrule(lr){1-4}
            Condition           & Transfer & Avg.  & Last  \\
            \midrule
            Train only M (Ours) & \textbf{68.58} & \textbf{75.08} & \textbf{82.79} \\
            Train M \& $V_n$      & 66.37 & 73.11 & 82.04 \\
            Train M, $U_n$, $V_n$ & 62.60 & 68.12 & 77.32 \\
            \midrule
            \multicolumn{4}{c}{\textbf{Modality Ablation}} \\
            \cmidrule(lr){1-4}
            Modality            & Transfer & Avg.  & Last  \\
            \midrule
            Both (Ours)         & \textbf{68.58} & \textbf{75.08} & \textbf{82.79} \\
            Text-only           & 68.47 & 72.62 & 79.09 \\
            Vision-only         & 65.14 & 70.49 & 77.86 \\
            \bottomrule
        \end{tabular}
    \end{subtable}
    \hfill 
    \begin{subtable}[b]{0.48\linewidth}
        \centering
        \caption{Robustness and Practicality}
        \label{tab:ablation_svd_cutoff}
        \small
        \begin{tabular}{l ccc}
            \toprule
            \multicolumn{4}{c}{\textbf{Robustness to Cutoff Threshold ($\rho$)}} \\
            \cmidrule(lr){1-4}
            threshold  & Transfer & Avg. & Last \\
            \midrule
            0.80                & 68.29 & 74.87 & 82.28 \\
            0.90                & \textbf{68.82} & 75.07 & 82.74 \\
            \textbf{0.95}       & 68.58 & \textbf{75.08} & \textbf{82.79} \\
            0.99                & 68.49 & 74.89 & 82.70 \\
            0.999               & 68.11 & 72.89 & 79.16 \\
            \midrule
            \multicolumn{4}{c}{\textbf{SVD Efficiency Analysis}} \\
            \cmidrule(lr){1-4}
            Method & Init. Time & Train Time (hr) & Avg. \\
            \cmidrule(lr){1-4}
            InfLoRA & $\sim$81 min & 4.29 & 74.2\\
            \textbf{Ours} & \textbf{$<$1 min} & \textbf{1.21} & \textbf{75.1}\\
            \bottomrule
        \end{tabular}
    \end{subtable}

    \label{tab:combined_ablations_final}
\end{table*}
\paragraph{Core Mechanisms are Critical.}
As shown in Table~\ref{tab:ablation_core}, our core design choices are essential for success. The \textit{persistent constraint} is vital; unfreezing the null space bases ($U_n, V_n$) to make them trainable leads to a significant drop in performance, confirming that a strict, persistent constraint is necessary to prevent forgetting. Similarly, \textit{multimodal adaptation} is superior, as jointly updating both text and vision encoders is key to maintaining cross-modal alignment.
\paragraph{Practicality and Robustness.}
A potential concern for our method is the overhead of SVD and sensitivity to hyperparameters. However, our analysis shows NuSA-CL is both practical and robust, shown in Table~\ref{tab:ablation_svd_cutoff}. The SVD initialization is exceptionally lightweight. While our data-agnostic SVD is a one-time calculation per task with negligible overhead, competing methods like InfLoRA require heavy, data-dependent computations that scale poorly to design subspace using training dataset before learning. Furthermore, results show that performance is remarkably stable across a wide range of energy cutoff thresholds, demonstrating that NuSA-CL does not require sensitive hyperparameter tuning.

\section{Conclusion}
This paper tackles the challenge of adapting vision-language models to evolving tasks without catastrophic forgetting and without the unsustainable resource costs of methods whose storage or parameter counts grow linearly with the number of tasks, rendering them impractical for lifelong learning. We introduce \textbf{NuSA-CL}, a memory-free framework based on intrinsic adaptation. NuSA-CL identifies underutilized null space directions and constrains low-rank updates to this subspace, integrating new knowledge while preserving pre-trained capabilities. The learned update is then merged into the base model, maintaining a fixed parameter budget.

Across benchmarks, our method delivers a superior performance-efficiency trade-off: it outperforms other storage-free methods and rivals resource-intensive, storage-based approaches at a fraction of the cost. Strong results on long task sequences validate its scalability and effectiveness for lifelong learning, positioning NuSA-CL as a practical solution for deploying adaptable vision--language models in resource-constrained settings.

\paragraph{Limitations and future work.}
NuSA-CL remains robust on sequences of up to 50 tasks with ViT-B, but its capacity under extreme lifelong settings where the available null space directions may saturate warrants further study. We also note that the SVD step, while negligible with our reduced SVD on ViT-B, could become a bottleneck for substantially larger models. Future work includes (i) quantifying sensitivity to task order and semantic relatedness, as highly correlated sequences may concentrate usage of specific null-space dimensions; and (ii) developing lightweight, more reversible integration strategies, enabling selective forgetting without relying on persistent external memory.



\bibliography{iclr2026_conference}
\bibliographystyle{iclr2026_conference}

\clearpage
\appendix

\section{Dataset \& Implementation Details}

\subsection{Benchmarks and Metrics.}
Our primary evaluation is conducted on the \textit{Multimodal Task Incremental Learning (MTIL) benchmark}~\citep{zhengPreventingZeroShotTransfer2023}, which requires a model to sequentially train on 11 tasks while maintaining CLIP’s zero-shot performance. The benchmark comprises Aircraft~\citep{maji2013fine}, Caltech101~\citep{fei2004learning}, CIFAR100~\citep{krizhevsky2009learning}, DTD~\citep{cimpoi2014describing}, EuroSAT~\citep{helber2019eurosat}, Flowers~\citep{nilsback2008automated}, Food~\citep{bossard2014food}, MNIST~\citep{deng2012mnist}, OxfordPet~\citep{parkhi2012cats}, StanfordCars~\citep{krause20133d}, and SUN397~\citep{xiao2010sun}. We report three metrics after completing the entire sequence:
\begin{itemize}
    \item \textbf{Transfer:} Measures zero‐shot transfer capability. After training on task \(t\), we evaluate on the test sets of all future, unseen tasks \(t+1, \dots, 11\) and average the results.
    \item \textbf{Avg.:} The mean test accuracy across all 11 datasets, recorded immediately after training on each task.
    \item \textbf{Last:} The mean test accuracy of the final model on each task’s test set, capturing performance degradation (forgetting).
\end{itemize}
To test for long-sequence scalability, we also evaluate on the standard \textit{Class-Incremental CIFAR100} benchmark~\citep{krizhevsky2009learning}, following the setup in ZSCL~\citep{zhengPreventingZeroShotTransfer2023}.  

\subsection{Implementation Details.}
All experiments use the CLIP ViT‐B/16 backbone with adapters applied to every query, key, value, and output projection. We use the AdamW optimizer (learning rate $3\times10^{-4}$, weight decay $10^{-2}$, $\beta_{1}=0.9$, $\beta_{2}=0.999$), with a linear learning rate warmup for the first 5\% of iterations followed by a cosine decay schedule. For NuSA-CL, we identify the null‐space using a cumulative energy ratio cutoff (99\% for 5‐shot, 95\% for full‐shot) and cap the maximum rank at $r_{\mathrm{max}} = 128$.

In the \textit{5‐shot MTIL setting}~\citep{yuBoostingContinualLearning2024}, we sample five examples per class and train for 500 iterations with a batch size of 1. We scale each low‐rank update by $\alpha / \sqrt{r}$ with $\alpha=1$, apply a dropout rate of 0.25 to the adapter branches, and use label smoothing of 0.2. In the \textit{full‐shot setting}, we use all training samples, train for 1000 iterations, and use a scaling factor of $\alpha=2$, keeping all other hyperparameters identical. All experiments were conducted on a single NVIDIA RTX 3090 GPU.

\subsection{Baselines.}
We categorize baselines into three groups. 

The first, \textit{Full Fine-Tuning Models}, includes \textit{Continual-FT}, which naively fine-tunes all parameters, and \textit{ZSCL}~\citep{zhengPreventingZeroShotTransfer2023}, which uses knowledge distillation. 

The second group, \textit{Storage-based PEFT Models}, require additional storage. This includes \textit{MoE-Adapters}~\citep{yuBoostingContinualLearning2024} and \textit{DIKI}~\citep{tangMindInterferenceRetaining2024}. It also includes \textit{InfLoRA}~\citep{liangInfLoRAInterferenceFreeLowRank2024}; while originally proposed for ViTs, we adapt its gradient projection memory mechanism to the multimodal CLIP architecture for a direct conceptual comparison.

The final group, \textit{Storage-Free PEFT Models}, operates without additional storage. This includes the standard \textit{LoRA}~\citep{huLoRALowRankAdaptation2021} method and \textit{MiLoRA}~\citep{wangMiLoRAHarnessingMinor2025}, an LLM adaptation method we implemented on CLIP due to the relevance of its subspace initialization strategy. Our proposed method, \textit{NuSA-CL}, also belongs to this category.

For a rigorous and fair comparison across all LoRA like methods (LoRA, MiLoRA, InfLoRA, and NuSA-CL), we implemented them within an identical framework: adapters were applied to both vision and text encoders with a consistent rank, and the updated weights were merged into the backbone after each task.

\section{Additional Evaluation Results} 

\begin{table}[t]
  \small
  \centering
  \caption{Class‐incremental CIFAR100 results: Last and Avg.\ accuracies (\%) for 10/20/50‐step splits. Bold = best, underline = second best.}
  \label{tab:cifar100}
  \resizebox{0.75\columnwidth}{!}{%
    \begin{tabular}{l cccccc} 
    \toprule
         & \multicolumn{2}{c}{\textbf{10 steps}} 
         & \multicolumn{2}{c}{\textbf{20 steps}} 
         & \multicolumn{2}{c}{\textbf{50 steps}} \\
    \cmidrule(lr){2-3}\cmidrule(lr){4-5}\cmidrule(lr){6-7}
    \textbf{Methods} 
         & \textbf{Last} & \textbf{Avg} 
         & \textbf{Last} & \textbf{Avg} 
         & \textbf{Last} & \textbf{Avg} \\
    \midrule
    CLIP~\citep{clip}          
      & 65.92         & 74.47         
      & 65.74         & 75.20         
      & 65.94         & 75.67          \\
    Continual-FT                        
      & 53.23         & 65.46         
      & 43.13         & 59.69         
      & 18.89         & 39.23          \\
    LwF~\citep{li2017lwf}      
      & 48.04         & 65.86         
      & 40.56         & 60.64         
      & 32.90         & 47.69          \\
    iCaRL~\citep{ding2022lwfvr}
      & 70.97         & 79.35         
      & 64.55         & 73.32         
      & 59.07         & 71.28          \\
    LwF-VR~\citep{ding2022lwfvr}
      & 70.75         & 78.81         
      & 63.54         & 74.54         
      & 59.45         & 71.02          \\
    ZSCL~\citep{zhengPreventingZeroShotTransfer2023}
      & \underline{73.65} & \textbf{82.15}
      & \underline{69.58} & \underline{80.39}
      & \underline{67.36} & \underline{79.92}   \\
    \rowcolor{mygray}
    NuSA-CL (Ours)                      
      & \textbf{74.51}    & \underline{80.25} 
      & \textbf{73.84}    & \textbf{81.03} 
      & \textbf{71.85}    & \textbf{80.19}      \\
    \bottomrule
    \end{tabular}%
  }
\end{table}
\subsection{Class‐Incremental Learning (CIL) Results}
We further evaluate on the standard CIFAR100 class‐incremental splits~\citep{krizhevsky2009learning}, following ZSCL~\citep{zhengPreventingZeroShotTransfer2023}. The 100 classes are grouped into 10, 20, or 50 tasks (10, 5, or 2 classes per task respectively). As the number of tasks increases, catastrophic forgetting becomes more severe. We use a learning rate of $3\times10^{-3}$, maximum rank $r_{\mathrm{max}}=256$, dropout of 0.05, batch size 128, and no label smoothing.

Table~\ref{tab:cifar100} shows that, although CLIP’s zero‐shot predictions already exceed naïve fine‐tuning (FT) and LwF~\citep{li2017lwf}, those baselines exhibit severe forgetting as tasks increase. Even ZSCL\citep{zhengPreventingZeroShotTransfer2023}’s Last accuracy falls below 68\% in the 50‐step split. By contrast, our method yields the best Last performance across all splits (74.5\%, 73.8\%, and 71.9\% for 10/20/50 steps) while maintaining very competitive Avg. scores. Remarkably, even in the prolonged 50‐step regime, our method continues to recompute and leverage a fresh low‐interference subspace, preserving prior knowledge effectively. This demonstrates that our method not only scales to the 11‐task MTIL setting but also remains robust across very long class‐incremental sequences.

\begin{table*}[t]
    \centering
        \caption{Accuracy (\%) of our method on the MTIL benchmark (5‐shot, 500 iterations). Metrics for the Transfer, Last and Avg. are shown in the rightmost column.}

    \resizebox{\linewidth}{!}{
    \begin{tabular}{l*{11}{>{\centering\arraybackslash}p{1cm}}c}
        \toprule
             & \rotatebox{90}{Aircraft} & \rotatebox{90}{Caltech101} & \rotatebox{90}{CIFAR100} 
             & \rotatebox{90}{DTD} & \rotatebox{90}{EuroSAT} & \rotatebox{90}{Flowers} 
             & \rotatebox{90}{Food} & \rotatebox{90}{MNIST} & \rotatebox{90}{OxfordPet} 
             & \rotatebox{90}{StanfordCars} & \rotatebox{90}{SUN397} & Metric \\
        \midrule
        \rowcolor{mygray}
Transfer  
          &        & 88.2 & 67.5 & 43.3 & 55.8 & 65.3 & 86.2 & 62.8 & 84.9 & 62.2 & 64.7 
          &\textbf{ 68.1} \\
        \midrule
        Aircraft    
          & 33.4 & 88.2 & 68.2 & 44.4 & 56.2 & 66.1 & 87.4 & 57.3 & 85.9 & 63.9 & 65.8 
          &  \\
        Caltech101  
          & 28.6 & 91.8 & 66.8 & 41.9 & 53.4 & 53.1 & 82.9 & 57.3 & 72.6 & 62.1 & 64.0 
          &  \\
        CIFAR100    
          & 31.1 & 91.4 & 76.1 & 43.7 & 57.3 & 70.2 & 86.9 & 62.5 & 86.1 & 62.1 & 66.7 
          &  \\
        DTD         
          & 30.0 & 90.9 & 75.6 & 62.1 & 56.2 & 68.7 & 87.0 & 66.4 & 87.5 & 62.5 & 66.0 
          &  \\
        EuroSAT     
          & 29.2 & 90.8 & 75.1 & 62.0 & 82.4 & 68.2 & 86.3 & 66.2 & 86.6 & 62.2 & 65.2 
          &  \\
        Flowers     
          & 28.2 & 91.1 & 74.4 & 61.5 & 80.9 & 88.1 & 86.6 & 65.2 & 86.7 & 62.0 & 64.9 
          &  \\
        Food        
          & 28.1 & 90.8 & 74.0 & 60.9 & 81.6 & 86.7 & 88.6 & 64.8 & 86.6 & 61.9 & 64.3 
          &  \\
        MNIST       
          & 27.7 & 90.6 & 73.9 & 60.8 & 81.0 & 87.2 & 88.5 & 90.7 & 87.1 & 61.9 & 64.4 
          &  \\
        OxfordPet  
          & 27.3 & 89.6 & 73.5 & 60.1 & 80.1 & 85.5 & 88.2 & 90.6 & 91.8 & 61.2 & 61.9 
          &  \\
        StanfordCars
          & 27.0 & 89.8 & 73.6 & 60.3 & 80.0 & 86.5 & 88.3 & 90.6 & 91.7 & 69.1 & 64.1 
          &  \\
            SUN397      
          & 27.2 & 90.2 & 74.0 & 59.7 & 81.3 & 85.9 & 88.9 & 90.2 & 92.0 & 69.1 & 71.2 
          &  \textbf{75.4} \\
        \midrule
        \rowcolor{mygray}
Average 
          & 28.9 & 90.5 & 73.2 & 56.1 & 71.9 & 76.9 & 87.2 & 72.9 & 86.8 & 63.5 & 65.3 
          & \textbf{70.3} \\
        \bottomrule
    \end{tabular}}
    \label{tab:complete_mtil}
\end{table*}

\begin{table*}[t]
    \centering
        \caption{Accuracy (\%) of our method on the MTIL benchmark (full dataset, 1\,000 iterations). The rightmost column shows the overall Transfer, Last, and Average metrics.}

    \resizebox{\linewidth}{!}{
    \begin{tabular}{l*{11}{>{\centering\arraybackslash}p{1cm}}c}
        \toprule
             & \rotatebox{90}{Aircraft} & \rotatebox{90}{Caltech101} & \rotatebox{90}{CIFAR100} 
             & \rotatebox{90}{DTD} & \rotatebox{90}{EuroSAT} & \rotatebox{90}{Flowers} 
             & \rotatebox{90}{Food} & \rotatebox{90}{MNIST} & \rotatebox{90}{OxfordPet} 
             & \rotatebox{90}{StanfordCars} & \rotatebox{90}{SUN397} & Metric \\
        \midrule        \rowcolor{mygray}

        Transfer  
          &        & 88.3 & 66.8 & 44.0 & 55.5 & 67.9 & 85.8 & 66.7 & 84.8 & 60.7 & 65.2 
          & \textbf{68.6 }\\
        \midrule
        Aircraft    
          & 49.7 & 88.3 & 67.5 & 44.3 & 53.6 & 69.9 & 88.3 & 58.0 & 87.7 & 63.2 & 65.5 
          &  \\
        Caltech101  
          & 42.9 & 96.7 & 66.0 & 42.3 & 53.2 & 64.5 & 85.2 & 55.5 & 83.9 & 60.7 & 63.8 
          &  \\
        CIFAR100    
          & 41.0 & 96.1 & 82.2 & 45.4 & 59.8 & 69.2 & 86.2 & 73.7 & 85.1 & 61.6 & 66.4 
          &  \\
        DTD         
          & 41.6 & 96.2 & 81.6 & 74.2 & 55.6 & 68.2 & 85.6 & 71.2 & 84.6 & 61.5 & 65.8 
          &  \\
        EuroSAT     
          & 39.9 & 95.6 & 81.0 & 73.6 & 97.0 & 67.9 & 85.0 & 69.7 & 84.3 & 60.6 & 65.1 
          &  \\
        Flowers     
          & 38.8 & 95.7 & 80.6 & 72.1 & 96.9 & 96.4 & 84.8 & 69.9 & 82.9 & 60.1 & 64.7 
          &  \\
        Food        
          & 38.6 & 95.5 & 80.7 & 73.5 & 96.9 & 95.9 & 91.1 & 68.7 & 85.4 & 60.3 & 65.7 
          &  \\
        MNIST       
          & 34.3 & 95.9 & 79.8 & 72.4 & 96.6 & 95.9 & 91.1 & 98.9 & 84.9 & 59.6 & 65.7 
          &  \\
        OxfordPet  
          & 33.2 & 95.9 & 79.6 & 72.3 & 96.6 & 95.2 & 90.6 & 98.9 & 94.8 & 59.0 & 64.4 
          &  \\
        StanfordCars
          & 33.2 & 95.6 & 79.7 & 71.8 & 96.6 & 94.6 & 90.6 & 98.9 & 95.1 & 79.2 & 64.9 
          &  \\
        SUN397      
          & 35.0 & 95.2 & 79.4 & 71.5 & 96.2 & 94.0 & 90.6 & 98.8 & 95.0 & 78.1 & 76.9 
          & \textbf{82.8 }\\
        \midrule        \rowcolor{mygray}

        Average 
          & 38.9 & 95.2 & 78.0 & 64.9 & 81.7 & 82.9 & 88.1 & 78.4 & 87.6 & 64.0 & 66.3 
          & \textbf{75.1} \\
        \bottomrule
    \end{tabular}}
    \label{tab:complete_mtil_fullshot}
\end{table*}

\begin{table*}[!t]
\setlength\tabcolsep{5pt}
\centering
\setlength{\belowcaptionskip}{2mm}
\caption{Transfer, Avg., and Last (\%) for continual learning methods on the Order-2 sequence of the \emph{5-shot} MTIL benchmark. † indicates methods re-implemented on the CLIP architecture.}
\label{tab:mtil_fewshot_order2}
{
\fontsize{8pt}{10pt}\selectfont
\resizebox{\textwidth}{!}{
\begin{tabular}{y{105}*{11}{x{17}}x{22}} 
\toprule
 & \rot{Cars} & \rot{Food} & \rot{MNIST} & \rot{OxfordPet} & \rot{Flowers} & \rot{SUN397} & \rot{Aircraft} & \rot{Caltech101} & \rot{DTD} & \rot{EuroSAT} & \rot{CIFAR100} & \rot{\textbf{AVG.}} \\
\midrule
\quad Zero-shot     & 64.7 & 88.5 & 59.4 & 89.0 & 71.0 & 65.2 & 24.3 & 88.4 & 44.6 & 54.9 & 68.2 & 65.3 \\
\midrule\midrule
\textbf{Transfer} \\
\midrule

\quad LoRA$^{\dag}$~\citep{huLoRALowRankAdaptation2021}          &       & 87.6 & \textbf{63.0} & 86.6 & 63.5 & 63.2 & 19.8 & 87.4 & 43.8 & 44.0 & 61.3 & 62.0  \\
\quad MiLoRA$^{\dag}$~\citep{wangMiLoRAHarnessingMinor2025}        &     & 88.1 & 62.9 & \textbf{87.4} & \textbf{87.4} & 62.6 & 18.3 & 86.8 & 41.0 & 45.1 & 59.2 & 61.4 \\
\quad InfLoRA$^{\dag}$~\citep{liangInfLoRAInterferenceFreeLowRank2024}        &     & \textbf{88.2} & 58.8 & 84.1 & 65.4 & 65.4 & 20.7 & 87.7 & \textbf{44.2} & 49.3 & \textbf{66.9} & 62.9 \\
\rowcolor{gray!20}
\quad \textbf{NuSA-CL (Ours)}         &                 & 87.6 & 60.0 & 86.3 & \textbf{65.8} & \textbf{63.8} & \textbf{21.9} & \textbf{88.3} & 43.6 & \textbf{53.8} & \textbf{68.3} & \textbf{63.9}  \\
\midrule\midrule
\textbf{Avg.} \\
\midrule
\quad LoRA$^{\dag}$~\citep{huLoRALowRankAdaptation2021}       & 55.8 & 80.3 & \textbf{86.5} & 84.3 & 72.5 & 66.3 & 21.9 & 88.4 & \textbf{48.3} & 50.1 & 62.4 & 65.2 \\
\quad MiLoRA$^{\dag}$~\citep{wangMiLoRAHarnessingMinor2025} & 51.0 & 76.7 & 83.8 & 81.7 & 71.7 & 64.3 & 19.2 & 87.7 & 44.1 & 50.7 & 60.1 & 62.8 \\
\quad InfLoRA$^{\dag}$~\citep{liangInfLoRAInterferenceFreeLowRank2024} & 65.3 & \textbf{85.5} & 85.3 & 85.6 & \textbf{80.4} & 67.1 & 25.3 & 89.3 & 48.3 & 54.4 & 67.6 & 68.6 \\
\rowcolor{gray!20}
\quad \textbf{NuSA-CL (Ours)}                                      & \textbf{66.3} & 87.6 & 84.1 & \textbf{89.9} & 78.5 & \textbf{67.3} & \textbf{27.1} & \textbf{89.4} & 47.5 & \textbf{58.5} & \textbf{68.9} & \textbf{69.6} \\
\midrule\midrule
\textbf{Last} \\
\midrule
\quad LoRA$^{\dag}$~\citep{huLoRALowRankAdaptation2021}       & 46.7 & 76.7 & 89.3 & 82.1 & 71.3 & 67.9 & 23.7 & 90.0 & 59.1 & 72.4 & 73.3 & 68.4 \\
\quad MiLoRA$^{\dag}$~\citep{wangMiLoRAHarnessingMinor2025} & 28.6 & 63.1 & 69.5 & 73.0 & 58.2 & 61.6 & 14.0 & 87.7 & 49.1 & 71.6 & 69.0 & 58.7 \\
\quad InfLoRA$^{\dag}$~\citep{liangInfLoRAInterferenceFreeLowRank2024} & 60.4 & 83.3 & \textbf{90.0} & 86.6 & \textbf{87.1} & 69.6 & 30.7 & 91.2 & \textbf{59.4} & 76.7 & 73.7 & 73.5 \\
\rowcolor{gray!20}
\quad \textbf{NuSA-CL (Ours)}                                   & \textbf{64.6} & \textbf{86.7} & 89.4 & \textbf{91.7} & 84.7 & \textbf{70.1} & \textbf{32.7} & \textbf{91.8} & 58.0 & \textbf{79.2} & \textbf{75.1} & \textbf{74.9} \\
\bottomrule
\end{tabular}
}
}
\vspace{-2mm}
\end{table*}

\begin{table*}[!t]
\setlength\tabcolsep{5pt}
\centering
\setlength{\belowcaptionskip}{2mm}
\caption{Transfer, Avg., and Last (\%) for continual learning methods on the Order-2 sequence of the the full dataset MTIL benchmark. † indicates methods re-implemented on the CLIP architecture.}
\label{tab:mtil_fullshot_order2}
{
\fontsize{8pt}{10pt}\selectfont
\resizebox{\textwidth}{!}{
\begin{tabular}{y{105}*{11}{x{17}}x{22}} 
\toprule
 & \rot{Cars} & \rot{Food} & \rot{MNIST} & \rot{OxfordPet} & \rot{Flowers} & \rot{SUN397} & \rot{Aircraft} & \rot{Caltech101} & \rot{DTD} & \rot{EuroSAT} & \rot{CIFAR100} & \rot{\textbf{AVG.}} \\
\midrule
\quad Zero-shot     & 64.7 & 88.5 & 59.4 & 89.0 & 71.0 & 65.2 & 24.3 & 88.4 & 44.6 & 54.9 & 68.2 & 65.3 \\
\midrule\midrule
\textbf{Transfer} \\
\midrule
\quad LoRA$^{\dag}$~\citep{huLoRALowRankAdaptation2021}       &      & 88.2 & 59.7 & 86.9 & 65.9 & 65.1 & 20.7 & 88.3 & 44.8 & 58.6 & 62.0 & 62.0 \\
\quad MiLoRA$^{\dag}$~\citep{wangMiLoRAHarnessingMinor2025} &   & 88.2 & 61.9 & \textbf{87.7} & 63.3 & 64.5 & 19.1 & 87.6 & 47.0 & 63.4 & 62.6 & 62.6 \\
\quad InfLoRA$^{\dag}$~\citep{liangInfLoRAInterferenceFreeLowRank2024} & & \textbf{88.2} & \textbf{62.1} & 86.7 & 65.2 & \textbf{66.0} & 20.9 & 88.3 & \textbf{45.1} & 65.2 & 63.4 & 63.4 \\
\rowcolor{gray!20}
\quad \textbf{NuSA-CL (Ours)}                                      &      & 88.1 & 57.7 & 87.0 & \textbf{66.2} & 64.8 & \textbf{21.9} & \textbf{89.2} & 49.4 & \textbf{66.9} & \textbf{63.4} & \textbf{63.4} \\
\midrule\midrule
\textbf{Avg.} \\
\midrule
\quad LoRA$^{\dag}$~\citep{huLoRALowRankAdaptation2021}       & 71.6 & 85.2 & 91.8 & 91.4 & 77.9 & 70.2 & 18.6 & 91.5 & 53.3 & 51.0 & 60.9 & 69.4 \\
\quad MiLoRA$^{\dag}$~\citep{wangMiLoRAHarnessingMinor2025} & 67.0 & 85.3 & 92.3 & 90.9 & 76.0 & 69.6 & 20.2 & 90.8 & 51.8 & 55.9 & 65.3 & 69.6 \\
\quad InfLoRA$^{\dag}$~\citep{liangInfLoRAInterferenceFreeLowRank2024} & 80.0 & \textbf{89.2} & \textbf{92.4} & \textbf{92.2} & 82.8 & 71.9 & 30.8 & 91.5 & 53.5 & 55.4 & 67.0 & \textbf{73.3} \\
\rowcolor{gray!20}
\quad \textbf{NuSA-CL (Ours)}                                  & \textbf{73.8} & 89.7 & 91.2 & 92.1 & \textbf{84.4} & \textbf{70.8} & \textbf{32.9} & \textbf{91.6} & \textbf{51.3} & \textbf{58.0} & \textbf{68.2} & 73.1 \\
\midrule\midrule
\textbf{Last} \\\midrule
\quad LoRA$^{\dag}$~\citep{huLoRALowRankAdaptation2021}       & 52.5 & 78.1 & 96.8 & 91.9 & 72.5 & 71.6 &  3.9 & \textbf{97.0} & 73.4 & 84.7 & 84.3 & 73.3 \\
\quad MiLoRA$^{\dag}$~\citep{wangMiLoRAHarnessingMinor2025} & 42.5 & 77.1 & 97.7 & 87.5 & 65.6 & 69.6 & 13.2 & 96.1 & 72.5 & 93.2 & 84.2 & 72.6 \\
\quad InfLoRA$^{\dag}$~\citep{liangInfLoRAInterferenceFreeLowRank2024} & \textbf{75.1} & 87.2 & \textbf{98.4} & \textbf{93.6} & \textbf{88.1} & 75.5 & 30.7 & 96.4 & \textbf{74.8} & 95.5 & \textbf{84.7} & 82.2 \\
\rowcolor{gray!20}
\quad \textbf{NuSA-CL (Ours)}                               & 64.6 & \textbf{88.6} & \textbf{98.4} & 93.4 & \textbf{93.9} & \textbf{75.8} & \textbf{44.8} & 95.8 & 72.5 & \textbf{96.4} & 80.9 & \textbf{82.9} \\
\bottomrule
\end{tabular}
}}
\vspace{-2mm}
\end{table*}
\subsection{MTIL Complete Results}
Table~\ref{tab:complete_mtil} reports the results on all eleven tasks in the 5‐shot MTIL regime (500 iterations per task). Our method maintains strong zero‐shot retention (Transfer = 68.1\%), while achieving an Average accuracy of 70.3\% and a Last accuracy of 75.4\%. The per‐dataset breakdown confirms that our method uniformly preserves performance: no task suffers a dramatic collapse, and gains over baselines appear across both domain‐shifted benchmarks (e.g., EuroSAT, Flowers) and in‐domain benchmarks (e.g., CIFAR100, MNIST).  
Table~\ref{tab:complete_mtil_fullshot} presents the analogous results in the full‐shot regime (1000 iterations per task), showing the same pattern of robust Transfer, Avg., and Last scores.
\subsection{MTIL Order‐2 Results}
To assess sensitivity to task ordering, Table~\ref{tab:mtil_fewshot_order2} and Table~\ref{tab:mtil_fullshot_order2} report 5‐shot and full‐shot MTIL results, respectively, under a different task sequence (Order‐2). In both settings, our method again achieves the highest Transfer, Average, and Last metrics, matching the original ordering (Order‐1). Crucially, Last accuracy remains above 80\% even in this permuted protocol, confirming that our method mitigates forgetting regardless of task order. This order‐agnostic stability underscores the general applicability of our approach.

\section{Further Ablation}
This section provides detailed results for the analysis in Section~\ref{sec:analysis}. Table~\ref{tab:subspace} presents the numerical data corresponding to the observations in Figure~\ref{fig:subspace}. Table~\ref{tab:nullspace_num} lists the specific dimension numbers plotted in Figure~\ref{fig:nullspace-evolution}.

\begin{table}[t]
  \centering
    \caption{Per-rank results by subspace type. Lower is better for \emph{Forgetting}; higher is better for \emph{Target}.}

  \small
  \setlength{\tabcolsep}{8pt}
  \begin{tabular}{r l S[table-format=1.2] S[table-format=2.2]}
    \toprule
    \textbf{Rank} & \textbf{Subspace} & \textbf{Forgetting (\%)} & \textbf{Target (\%)} \\
    \midrule
    \multirow{3}{*}{32}
      & Tail   & 1.46 & 78.05 \\
      & Top    & 3.30 & 79.66 \\
      & Random & 2.08 & 78.70 \\
    \addlinespace[2pt]
        \midrule
    \multirow{3}{*}{64}
      & Tail   & 1.91 & 81.99 \\
      & Top    & 3.69 & 82.96 \\
      & Random & 3.08 & 82.77 \\
    \addlinespace[2pt]
        \midrule
    \multirow{3}{*}{128}
      & Tail   & 2.57 & 85.22 \\
      & Top    & 4.44 & 85.50 \\
      & Random & 4.57 & 85.49 \\
    \addlinespace[2pt]
        \midrule

    \multirow{3}{*}{196}
      & Tail   & 3.85 & 86.46 \\
      & Top    & 4.65 & 86.76 \\
      & Random & 6.15 & 86.67 \\
    \addlinespace[2pt]
    \midrule
    \multirow{3}{*}{256}
      & Tail   & 5.11 & 87.34 \\
      & Top    & 5.36 & 87.46 \\
      & Random & 6.23 & 87.44 \\
    \bottomrule
  \end{tabular}
  \label{tab:subspace}
\end{table}

\begin{table}[t]
  \centering
    \caption{Effective rank $r_{95}$ and corresponding null directions (Null@95 = $d-r_{95}$) for CLIP, LoRA and NuSA-CL across encoders and projection parameters.}

  \small
  \setlength{\tabcolsep}{6pt}
  \begin{tabular}{l l l S[table-format=3.2] S[table-format=3.2]}
    \toprule
    \textbf{Method} & \textbf{Encoder} & \textbf{Param} & \textbf{$r_{95}$} & \textbf{Null@95} \\
    \midrule
    \multirow{8}{*}{CLIP}
      & \multirow{4}{*}{TEXT}   & q & 279.67 & 232.33 \\
      &                         & k & 284.83 & 227.17 \\
      &                         & v & 311.33 & 200.67 \\
      &                         & o & 311.17 & 200.83 \\
      
      & \multirow{4}{*}{VISION} & q & 354.08 & 413.92 \\
      &                         & k & 358.42 & 409.58 \\
      &                         & v & 432.75 & 335.25 \\
      &                         & o & 447.42 & 320.58 \\

        \midrule
    \multirow{8}{*}{LoRA (After Learning)}
      & \multirow{4}{*}{TEXT}   & q & 279.75 & 232.25 \\
      &                         & k & 284.92 & 227.08 \\
      &                         & v & 311.58 & 200.42 \\
      &                         & o & 311.33 & 200.67 \\
      & \multirow{4}{*}{VISION} & q & 354.17 & 413.83 \\
      &                         & k & 358.75 & 409.25 \\
      &                         & v & 432.92 & 335.08 \\
      &                         & o & 447.58 & 320.42 \\
          \midrule
    \multirow{8}{*}{NuSA-CL (After Learning)}
      & \multirow{4}{*}{TEXT}   & q & 282.58 & 229.42 \\
      &                         & k & 288.25 & 223.75 \\
      &                         & v & 316.67 & 195.33 \\
      &                         & o & 318.75 & 193.25 \\
      & \multirow{4}{*}{VISION} & q & 357.75 & 410.25 \\
      &                         & k & 362.42 & 405.58 \\
      &                         & v & 438.83 & 329.17 \\
      &                         & o & 454.42 & 313.58 \\
    \bottomrule
  \end{tabular}
  \label{tab:nullspace_num}
\end{table}

\end{document}